\documentclass[sigplan]{acmart}
\usepackage[utf8]{inputenc} %
\usepackage[T1]{fontenc}    %
\usepackage{hyperref}       %
\usepackage{url}            %
\usepackage{booktabs}       %
\usepackage{nicefrac}       %
\usepackage{microtype}      %
\usepackage{xcolor}         %
\usepackage{cleveref}

\usepackage{mathtools}
\usepackage{siunitx}
\usepackage[abbreviations]{foreign}
\usepackage{multirow}
\usepackage{color, colortbl}
\usepackage{bm}

\usepackage[ruled,vlined]{algorithm2e}
\SetKwComment{Comment}{$\vartriangleright$ }{}

\DeclareMathOperator{\argmin}{argmin}%

\usepackage{graphicx}
\usepackage{subcaption}

\newboolean{showcomments}
\setboolean{showcomments}{true}
\ifthenelse{\boolean{showcomments}}
{ \newcommand{\mynote}[3]{
		\fbox{\bfseries\sffamily\scriptsize#1}
		{\small$\blacktriangleright$\textsf{\emph{\color{#3}{#2}}}$\blacktriangleleft$}}
	\newcommand{\zzz}[1]{{\setlength{\fboxsep}{2pt}\fcolorbox{black}{yellow}{\textsf{\emph{#1}}}}\xspace}}
{ \newcommand{\mynote}[3]{}
	\newcommand{\zzz}[1]{}}

\usepackage{acronym}
\acrodef{DL}{deep learning}
\acrodef{ML}{machine learning}
\acrodef{FL}{federated learning}
\acrodef{SGD}{stochastic gradient descent}
\acrodef{IID}{independent and identically distributed}
\acrodef{non-IID}{non independent and identically distributed}
\acrodef{HPO}{hyperparameter optimization}
\acrodef{HP}{hyperparameter}
\acrodef{KD}{knowledge distillation}

\newcommand{\sys}{\textsc{CPFL}\xspace}
\newcommand{\KD}{\ac{KD}\xspace}
\newcommand{\FL}{\ac{FL}\xspace}

\newcommand{\femnist}{FEMNIST\xspace}

\newcommand{\cifar}{CIFAR-10\xspace}

\newcommand{\fedkd}{FedKD\xspace}
\newcommand{\fedavg}{FedAvg\xspace}

\newcommand{\vect}[1]{\ensuremath{\bm{#1}}}
\newcommand{\E}{\ensuremath{\mathbb{E}}}

\newcommand{\Hcal}{\ensuremath{\mathcal{H}}}
\newcommand{\Dcal}{\ensuremath{\mathcal{D}}}
\newcommand{\Lcal}{\ensuremath{\mathcal{L}}}

\newcommand{\Rb}{\ensuremath{\mathbb{R}}}
\newcommand{\Ycal}{\ensuremath{\mathcal{Y}}}
\newcommand{\Xcal}{\ensuremath{\mathcal{X}}}
\newcommand{\Zcal}{\ensuremath{\mathcal{Z}}}

\newcommand{\parenthese}[1]{\left(#1\right)}
\newcommand{\bracket}[1]{\left[#1\right]}

\newcommand{\curlybracket}[1]{\left\{#1\right\}}

\newcommand{\x}{\boldsymbol{x}\xspace}

\usepackage{tikz}
\usepackage{pgfplots}
\usepackage{sansmath}
\pgfplotsset{compat=newest}

\usepgfplotslibrary{external,units,colorbrewer,groupplots,fillbetween}
\tikzexternalize
\tikzsetexternalprefix{figures/}
\tikzset{external/mode=list and make}
\usetikzlibrary{patterns}

\makeatletter
\begingroup\endlinechar=-1\relax
\everyeof{\noexpand}%
\edef\x{\endgroup\def\noexpand\homepath{%
		\@@input|"kpsewhich --var-value=HOME" }}\x
\makeatother

\def\overleafhome{/tmp}
\newcommand{\inputplot}[2]{%
	\ifx\homepath\overleafhome%
	\IfBeginWith{#1}{plots}{\includegraphics{main-figure#2.pdf}}{#1}%
	\else%
	{\sffamily\scriptsize\input{#1}}
	\fi}

\newcommand{\newgroupwidth}[2]%
{\expandafter\xdef\csname groupwidth#1\endcsname{#2}}

\newcounter{groupwidth}
\newsavebox{\groupwidthbox}
\makeatletter
\newenvironment{groupwidth}[1]%
{\edef\groupnumber{#1}%
	\stepcounter{groupwidth}%
	\@ifundefined{groupwidth\thegroupwidth}{\pgfmathsetlengthmacro{\mywidth}{(\linewidth-0cm)/\groupnumber}}%
	{\expandafter\let\expandafter\mywidth\csname groupwidth\thegroupwidth\endcsname}%
	\begin{lrbox}{\groupwidthbox}%
		\tikzset{/pgfplots/width={\mywidth}}%
		\ignorespaces}%
	{\end{lrbox}%
	\usebox\groupwidthbox
	\pgfmathsetlengthmacro{\mywidth}{\mywidth + ((\linewidth-0cm) - \wd\groupwidthbox)/\groupnumber}
	\immediate\write\@auxout{\string\newgroupwidth{\thegroupwidth}{\mywidth}}}
\makeatother

\usepackage{physics}
\usepackage{enumitem}
\usepackage{thmtools} 
\usepackage{thm-restate}
\usepackage{bbm}
\theoremstyle{definition}

\theoremstyle{remark}

\newtheorem{theorem}{Theorem}
\newtheorem{lemma}[theorem]{Lemma}

\allowdisplaybreaks

\newcommand{\R}{\mathbb{R}}

\graphicspath{ {figures/} }

\copyrightyear{2025}
\acmYear{2025}
\setcopyright{acmlicensed}\acmConference[EuroMLSys '25]{The 5th Workshop on Machine Learning and Systems }{March 30-April 3 2025}{Rotterdam, Netherlands}
\acmBooktitle{The 5th Workshop on Machine Learning and Systems (EuroMLSys '25), March 30-April 3 2025, Rotterdam, Netherlands}
\acmDOI{10.1145/3721146.3721939}
\acmISBN{979-8-4007-1538-9/2025/03}

\begin{CCSXML}
<ccs2012>
   <concept>
       <concept_id>10010147.10010919.10010172</concept_id>
       <concept_desc>Computing methodologies~Distributed algorithms</concept_desc>
       <concept_significance>500</concept_significance>
       </concept>
 </ccs2012>
\end{CCSXML}

\ccsdesc[500]{Computing methodologies~Distributed algorithms}

\begin{document}

\title{Harnessing Increased Client Participation with Cohort-Parallel Federated Learning}

\author{Akash Dhasade}
\affiliation{
  \institution{EPFL}
  \city{Lausanne}
  \country{Switzerland}
}

\author{Anne-Marie Kermarrec}
\affiliation{
  \institution{EPFL}
  \city{Lausanne}
  \country{Switzerland}
}

\author{Tuan-Anh Nguyen}
\affiliation{
  \institution{Independent Researcher}
  \city{Grenoble}
  \country{France}
}

\author{Rafael Pires}
\affiliation{
  \institution{EPFL}
  \city{Lausanne}
  \country{Switzerland}
}

\author{Martijn de Vos}
\affiliation{
  \institution{EPFL}
  \city{Lausanne}
  \country{Switzerland}
}

\renewcommand{\shortauthors}{Dhasade et al.}

\begin{abstract}
Federated learning (FL) is a machine learning approach where nodes collaboratively train a global model.
As more nodes participate in a round of FL, the effectiveness of individual model updates by nodes also diminishes.
In this study, we increase the effectiveness of client updates by dividing the network into smaller partitions, or \emph{cohorts}.
We introduce Cohort-Parallel Federated Learning (CPFL): a novel learning approach where each cohort independently trains a global model using FL, until convergence, and the produced models by each cohort are then unified using knowledge distillation.
The insight behind CPFL is that smaller, isolated networks converge quicker than in a one-network setting where all nodes participate.
Through exhaustive experiments involving realistic traces and non-IID data distributions on the CIFAR-10 and FEMNIST image classification tasks, we investigate the balance between the number of cohorts, model accuracy, training time, and compute resources.
Compared to traditional FL, CPFL with four cohorts, non-IID data distribution, and CIFAR-10 yields a 1.9$\times$ reduction in train time and a 1.3$\times$ reduction in resource usage, with a minimal drop in test accuracy.
\end{abstract}

\keywords{Federated Learning, Knowledge Distillation}

\maketitle

\acresetall
\section{Introduction}

\Ac{FL} allows for the collaborative training of a machine learning model across a distributed network of training nodes, or clients, without ever moving training data~\cite{mcmahan2017communication}.
A central server orchestrates the process by selecting a subset of clients, referred to as a \emph{cohort}~\cite{charles2021large}, and sends them the most recent version of the global model.
Subsequently, clients in this cohort undertake a few training steps on their local datasets, contributing to the refinement of the model.
The locally updated models are then transmitted back to the server for aggregation.
This iterative process continues with the server selecting another cohort, possibly composed of a different set of clients, for each successive training round until the global model converges.

The effect of the cohort size on \ac{FL} performance has been assessed in numerous studies~\cite{mcmahan2017communication,charles2021large,huba2022papaya,charles2024towards}.
While larger cohort sizes intuitively learn from more data in each round, thus accelerating the convergence of the global model~\cite{azam2023federated}, they have been found to yield diminishing returns~\cite{charles2021large}.
Furthermore, larger cohorts often use client updates inefficiently, requiring more resources and time to reach similar accuracy levels compared to smaller cohort sizes~\cite{charles2021large, huba2022papaya}. 
As a result, current methods struggle to fully take advantage of model updates from a large number of clients~\cite{bonawitz2019towards}. 

This work explores a strategy that harnesses increased client participation more efficiently.
\Ac{FL} research traditionally focuses on the one-cohort setting and speeds up model convergence by increasing the size of a cohort~\cite{bonawitz2019towards}.
We instead propose and investigate the simple idea of \textit{partitioning} the network into several cohorts\footnote{We refer to network partitions as \emph{cohorts} throughout this work.}, each of which runs independent and parallel FL training sessions.
We name this approach \textit{Cohort-Parallel Federated Learning}, or \sys.
The architecture of \sys is illustrated in~\Cref{fig:architecture}.
Clients within a cohort contribute to training an \ac{FL} model specific to that cohort (step 1), leveraging any existing \ac{FL} algorithm~\cite{abdelmoniem2023refl,lai2021oort,mcmahan2017communication}.
This trained cohort model is then uploaded to the global FL server (step 2).
Finally, the server unifies these multiple cohort models into a single global model through the process of \Ac{KD} (step 3).
\ac{KD} is a technique that combines the knowledge of different teacher models into a unified student model~\cite{hintondistillation}.
We leverage cross-domain, unlabeled and public datasets to carry out \ac{KD} and produce the final model.

\begin{figure}[t]
    \centering
    \includegraphics[scale=0.9]{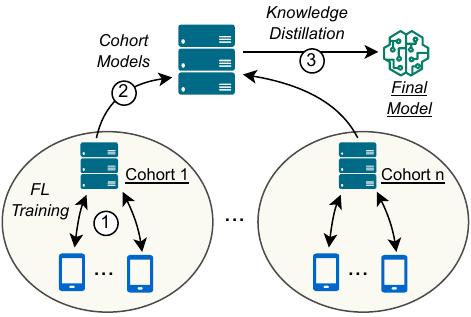}
    \caption{The architecture of Cohort-Parallel Federated Learning (\sys).}
    \label{fig:architecture}
\end{figure}

\sys provides three main benefits compared to single-cohort FL.
Firstly, smaller networks make more efficient use of client updates, \textit{reducing the computation and communication resources} required to train a model.
Secondly, smaller networks \textit{converge significantly quicker} than the entire network, reducing overall training time.
We illustrate this effect in Figure~\ref{fig:motivation}, which shows the evolution of validation loss without partitioning the network (dashed curve) and when partitioning the network (solid curves).
We annotate with vertical lines the round when models have converged, according to our stopping criterion described in~\Cref{sec:exp_setup}.
From this figure, it is evident that smaller partitions converge faster than when the network is not partitioned, both with IID and non-IID data distributions.
Thirdly, partitioning the network into smaller cohorts provides a flexible means to control resource usage and time to convergence by appropriately choosing number of cohorts.
Our proposal is generic and can therefore be applied to any \ac{FL} setting.

\paragraph{Contributions.} This work makes the following contributions:
\begin{itemize}
    \item We propose \sys, which uses partitioning as a simple and effective strategy to improve FL efficiency in terms of time-to-accuracy and training resource usage (\Cref{sec:cpfl}). 
    \item We provide theoretical guarantees on the performance of the global model through domain adaptation analysis. Our result extends \KD to a more general setting where the teacher model is composed of a mixture of distributions, as is the case for \sys (\Cref{sec:da}).
    \item We conduct extensive experiments using realistic traces of devices exhibiting different compute and network speeds on two image classification datasets with varying data distributions (\Cref{sec:eval}). 
    We analyze the effect of the number of cohorts in \sys on the achieved test accuracy, resource utilization, and training time.
    Our results on the \cifar dataset under non-IID data distributions demonstrate that employing just four cohorts can already lead to a $1.3\times$ reduction in training resource usage and $1.9\times$ faster convergence with a minimal drop in test accuracy of 0.6\% compared to traditional \ac{FL}.
\end{itemize}

In summary, \sys offers \ac{FL} practitioners a simple and pragmatic method to obtain considerable resource savings and shorter training sessions in FL systems.

\begin{figure} [t]
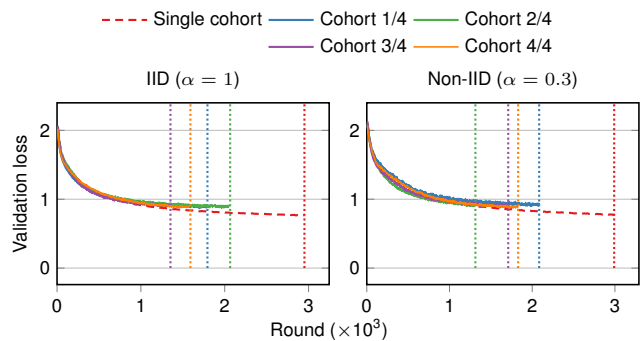

	\inputplot{plots/motivation}{0}
	\caption{Comparing validation loss for partitioned (solid curves) and unpartitioned (dashed curve) networks across IID (left figure) and non-IID (right figure) distributions. The vertical dotted line denotes the convergence point of the training. Additional details on the experiment setup are provided in~\Cref{sec:setup_motivation_plot}.
    }
	\label{fig:motivation}
\end{figure}

\section{Related Work}
\label{sec:rw}

\textbf{Cohorts in Federated Learning.} 
The idea of grouping clients \textit{under some criterion} referred to as \textit{clustering} has been well-studied in the \FL literature~\cite{ghosh2020efficient,duan2021flexible,liu2022auxo}.
The first set of these works leverages clustering to solve the federated multi-task learning (FMTL) problem, which assumes that there exist $k$ different data distributions $\Dcal_1, \Dcal_2, \ldots \Dcal_K$ in a network of $m$ clients.
Each distribution $\Dcal_i$ corresponds to a different task $i$ where these approaches aim to cluster clients that solve similar tasks without the explicit knowledge of cluster identities~\cite{ghosh2020efficient,sattler2020clustered}.
The second set of works focuses on mitigating the impact of data \ac{non-IID}ness or client data distribution shift (\eg concept shift) by clustering statistically similar clients~\cite{briggs2020,liu2022auxo}. 
Such clustering relies on some statistics collected during training \eg similarity of gradient or model updates obtained from clients, and typically introduces overheads to obtain clusters.
Other works use clustering to build a multi-tier hierarchical topology considering both data distribution and communication efficiency~\cite{ma2024fractal}.
In contrast to the above works, we study and showcase the benefits of \textit{ partitioning} \ie simply dividing the network into arbitrary groups of clients.
Uniquely, our work explores the trade-offs between time to convergence, resource usage, and attained accuracies intricately tied to the number of partitions.

\paragraph{Parallelism in Federated Learning.}
The parallelism induced by increasing the number of participating clients per round was initially investigated by McMahan \etal \shortcite{mcmahan2017communication}.
They observed that leveraging more clients per round reduced the number of rounds required to reach a target accuracy. However, the benefits extended only to a certain threshold number of clients, beyond which the returns diminished significantly. 
Charles \etal \shortcite{charles2021large} further examined the impact of large client participation across various learning tasks.
The conclusions were similar, where the threshold was empirically shown to lie between 10 and 50, interestingly the same for all tasks.
In a network of thousands of clients with several hundred available clients per round, this empirical threshold clearly illustrates the limitations of current methods in effectively utilizing increased client participation.
\cite{charles2021large} also show that federated algorithms under large client participation use local updates inefficiently, requiring significantly more samples per unit-accuracy.
These limitations call for a novel approach that can better capitalize increased client participation~\cite{bonawitz2019towards}.

\paragraph{Knowledge Distillation (KD)}
In this work, we leverage \ac{KD} to combine the knowledge of individual cohort models into a single global model at the server.
\ac{KD} was initially proposed to extract information from a complex teacher model into a small student model~\cite{hintondistillation}. 
Conventionally, the training of the student model involves the minimization of the disparity between the logits produced by the student and teacher, which are computed utilizing an appropriate auxiliary dataset~\cite{hintondistillation}.
\ac{KD} has been increasingly used in \ac{FL} to reduce communication costs~\cite{he2020group,sattlercfd,gong2022preserving}, enable heterogeneous client models~\cite{li2019fedmd,taolin} or mitigate privacy risks~\cite{gong2022preserving}.
More generally, \ac{KD} has also been used to exchange knowledge in a distributed network of clients~\cite{bistritz2020distributed,zhmoginov2023decentralized}.

\paragraph{\sys extremes.} We highlight two algorithms in the literature that are special cases of our proposed \sys algorithm.
One extreme is the one-shot  \fedkd algorithm~\cite{gong2022preserving} wherein each node or client is its own cohort.  
On the other extreme is the standard \fedavg algorithm~\cite{mcmahan2017communication} where all nodes belong to a single cohort.
In particular, \fedavg does not utilize \ac{KD} since there is only one cohort, requiring no knowledge fusion.  
\sys can proficiently navigate this spectrum, exhibiting interesting characteristics that can be controlled by altering the number of cohorts.

\section{Cohort-Parallel Federated Learning}
\label{sec:cpfl}

\begin{algorithm}[t]
\DontPrintSemicolon
\SetAlgoLined
\KwIn{Set of $n$ cohorts $\curlybracket{\mathcal{C}_i}_{i=1}^n$, aggregation weights $\{p_i\}_{i=1}^n$, learning rate $\eta$, public dataset $\hat{\Dcal}_p$, parametric function $f$ from $h_{\theta}$, number of epochs $E$}
\KwOut{Final global model parameters $\theta$} 
\BlankLine
\textbf{Cohort Servers Execute}\;
\For{each cohort $i = 1, \ldots, n$ in parallel}{
  $\theta_i \gets$ train model using standard \FL \;
  Send $\theta_i$ to the global server \;

  }
\BlankLine
\textbf{Global Server Executes}\;
\text{Initialize global model parameters $\theta_s$}\;
\For{$\vect{x} \in \hat{\Dcal}_p$}{
        \For{$i = 1,\ldots,n$}{
        $\vect{z}_i \gets f(\vect{x};\theta_i)$ \;
         }
        $\widetilde{\vect{z}} \leftarrow \sum_{i=1}^n p_i \vect{z}_i $ \Comment*[r]{aggregate logits}
    }
\BlankLine
\For{$e = 1, \ldots , E$}{ 
    \For{\text{mini-batch} $\vect{b} \in \hat{\Dcal}_p$}{
        $\vect{z}_s \leftarrow f(\vect{b};\theta_s)$ \;
        $\theta_s \gets \theta_s - \eta \cdot \nabla_{\theta_s} \mathcal{L}$ \Comment*[r]{\ac{KD} using eq. (3)}
    }
}
\Return{$\theta$} \;
\caption{Cohort-Parallel Federated Learning}
\label{algo:cohort-parallel}
\end{algorithm}

We now describe \sys, a learning approach that combines the strength of multiple parallel \FL sessions with \ac{KD}.
We illustrate our approach in~\Cref{fig:architecture} and provide pseudo code in Algorithm~\ref{algo:cohort-parallel}.

\subsection{Algorithm overview}
Consider a supervised learning problem with input space $\mathcal{X}$ and output space $\mathcal{Y}$. 
For a model $h_\theta: \Xcal \rightarrow \Zcal$ parameterized by $\theta \in \R^d$, each data point $(x,y) \in \Xcal \times \Ycal$ incurs a loss of $\ell(h_\theta(x),y)$ where $\ell: \Zcal \times \Ycal \to \R$ is a non-negative loss function.
The expected loss of $h_{\theta}$ on data distribution $\Dcal$ is defined as $\Lcal_{\Dcal} (\theta) = \E_{(\vect{x},y) \sim \Dcal} \bracket{\ell \parenthese{ h_{\theta}(\vect{x}), y}}$. 
We consider an \FL setting with $M$ clients and our algorithm proceeds as follows.

In the first stage of the algorithm, the global server randomly partitions the clients into $n$ cohorts, each comprising $K$ clients such that $n*K=M$.\footnote{This is for simplicity - our setting still holds otherwise.}
We opt for a random partitioning of nodes into cohorts due to its simplicity and universality. 
This approach ensures unbiased division without introducing added complexities or biases that more advanced partitioning strategies might incur\footnote{An advanced partitioning scheme could be a multi-objective function that considers data skewness, device characteristics, \etc.}.
Each client has a private local dataset following its local distribution $\Dcal_{i,k}$.
Cohorts operate in parallel, running independent \FL training sessions using any traditional \FL algorithm, such as \fedavg, while reporting to their respective cohort servers, as shown in Step 1 of~\Cref{fig:architecture}. 
These servers could, for example, correspond to geographically distributed \FL servers running \FL sessions within their geographic boundaries or correspond to co-located servers within a single global server.
Each such server trains cohort specific parameters $\theta_i$, which minimizes the average risk across all clients within the cohort. 
Precisely, for every $i \in [n]$, 
\begin{align}
	\label{eq:fl-definition}
	\theta_i \in \argmin_{\theta} \frac{1}{K} \sum_{k=1}^K \Lcal_{\Dcal_{i,k}} (\theta).
\end{align}
Once training converges, the cohort server transmits \( \theta_i \) to the global server, as shown in Step 2 of~\Cref{fig:architecture}.  

In the second stage of the algorithm, the global server distills the knowledge of each cohort model (called teacher models) into a single global model (called the student model) once all cohorts have converged, shown in step 3 in~\Cref{fig:architecture}.
This knowledge transfer is facilitated by an unlabeled public dataset $\hat{\Dcal}_p$.
Specifically, the global server generates logit vectors $\vect{z}_i = f(\vect{x}; \theta_i)$ for every $\vect{x} \in \hat{\Dcal}_p$ for each cohort model $i \in [n]$.
These logits are then aggregated $\widetilde{\vect{z}} := \sum_{i=1}^n p_i \vect{z}_i$ where $p_i$ denotes the weights of the aggregation with $\sum_{i=1}^n p_i = 1$ to act as soft-targets for the process of knowledge distillation.
Denoting by $\theta_s$ the parameters of the global server's model and $\vect{z}_s = f(\vect{x}; \theta_s)$, the global server solves the following optimization problem:
\begin{align}
	\theta_s \in \argmin_{\theta} \E_{\vect{x} \sim \Dcal_p} \bracket{ 	\Lcal(\vect{z}_s,  \widetilde{\vect{z}}) } \label{eq:dist_loss_all} \\
	\text{where } \Lcal(\vect{z}_s,  \widetilde{\vect{z}}) = \norm{\vect{z}_s - \widetilde{\vect{z}}}_1 \label{eq:dist_loss}
\end{align} 
and $\norm{.}_1$ represents the L1 norm. Our complete approach is outlined in Algorithm~\ref{algo:cohort-parallel}.

We set the weights $\{p_i\}_{i=1}^n$ for logit aggregation by extending the approach used in one-shot \fedkd \cite{gong2022preserving} based on the label distribution of clients.
Each cohort server first aggregates the label distributions of its nodes, forming a cohort-wide distribution.  
The global server then assigns per cohort weight based on its aggregated label distribution.
Compared to unweighted averaging, this significantly boosts the effectiveness of knowledge distillation since particular cohorts might be better suited to predict particular target classes~\cite{gong2022preserving}.
However, sharing of label distributions by nodes within a cohort may pose privacy risks.
To mitigate this risk, one can compute aggregated label distributions using secure hardware~\cite{dhasade2022tee} or via secure aggregation~\cite{10.1145/3133956.3133982}, avoiding the leakage of individual label distributions.
Lastly, we highlight that \sys can leverage any existing \FL algorithm in stage 1, ranging from \fedavg~\cite{mcmahan2017communication} to more advanced algorithms~\cite{lai2021oort,abdelmoniem2023refl}.

\subsection{Cross-domain analysis}
\label{sec:da}
In line with our setting, we established a bridge between knowledge distillation and the theory of domain adaptation. The concept of domain adaptation revolves around training a classifier to perform effectively in a target domain using a model previously trained in a source domain. This concept mirrors the structure of our framework, where we distill knowledge from models trained in parallel using FL on multiple distinct distributions within a cohort and subsequently transfer it to a global model. By viewing our framework through this lens, we developed a generalization bound for our distilled model by drawing upon the principles of domain adaptation theory \cite{domain-adaptation-theory}.

Let $\Dcal$ represent the target distribution and $\Hcal$ be the hypothesis class parameterized by $\Theta \subset \Rb^d$ as  $\Hcal = \curlybracket{h_{\theta}, \theta \in \Theta}$. 
With a slight change in notation, we use $h$ in place of $h_{\theta}$ for simplicity and indicate by $\Lcal_{\Dcal}(h)$ the risk of $h \in \Hcal$. 
We define the $\Hcal \Delta \Hcal$-divergence between a source distribution $\Dcal'$ and the target distribution $\Dcal$, $d_{\mathcal{H} \Delta \mathcal{H}} (\Dcal', \Dcal)$ as 
\begin{align*}
 2 \sup_{h,h' \in \mathcal{H}} \abs{ \mathbb{P}_{\vect{\vect{x}} \sim \Dcal'} (h(\vect{x}) \neq h'(\vect{x})) - \mathbb{P}_{\vect{x} \sim \Dcal} (h(\vect{x}) \neq h'(\vect{x}))}
\end{align*}
with $\Hcal \Delta \Hcal$ representing the symmetric difference space defined as $\Hcal \Delta \Hcal := \curlybracket{h(\vect{x}) \oplus h'(\vect{x}) \vert h,h' \in \Hcal}$. 
Furthermore, we define $ \lambda := \inf_{h \in \Hcal} \curlybracket{\Lcal_{\Dcal'} (h) + \Lcal_{\Dcal}(h)} $ as the risk for the optimal hypothesis across the two distributions.

For each cohort $i \in [n]$, we designate $h_i$ and $\Dcal_i$ as the respective hypothesis and distribution. Given that each cohort conducts federated training in parallel, we characterize the distribution $\Dcal_i$ as a mixture of distributions from its $K$ clients \ie $\Dcal_i = \frac{1}{K} \sum_{k=1}^K \Dcal_{i,k}$. Consequently, we consider a problem of $n*K$ sources domain adaptation. Denoting the hypothesis on the global server by $h_s = \sum_{i=1}^n p_i h_i $, we state the following theorem:
\begin{theorem}
\label{thm:thm1}
Let $\Hcal$ be a finite hypothesis class and $h_s :=\sum_{i=1}^n p_i h_i$, where $p_i > 0$ and $\sum_{i=1}^n p_i = 1$. Suppose that each source dataset has $m$ instances. Then, for any $\delta \in (0,1)$, with probability at least $1-\delta$, the expected risk of $h_s$ on the target distribution $\Dcal$ is bounded by:
\begin{equation}
\label{eq:theorem_bound}
    \begin{aligned}
        \mathcal{L}_{\Dcal} (h_s)
            &\leq \sum_{i=1}^n \sum_{k=1}^K \frac{p_i}{K} \mathcal{L}_{\hat{\Dcal}_{i,k}} (h_{i}) \\
            &\quad + \sum_{i=1}^n \sum_{k=1}^K \frac{p_i}{2K} \left(\frac{1}{2} d_{\Hcal \Delta \Hcal} (\Dcal_{i,k}, \Dcal) + \lambda_{i,k}\right) \\
            &\quad + \sqrt{\frac{\log (2nK/\delta)}{2m}}
    \end{aligned}
\end{equation}
where $ \lambda_{i,k} := \inf_{h \in \Hcal} \curlybracket{\Lcal_{\Dcal_{i,k}} (h) + \Lcal_{\Dcal}(h)}$ and $\hat{\Dcal}_{i,k}$ is the observable dataset of the distribution $\Dcal_{i,k}$.
\end{theorem}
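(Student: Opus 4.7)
My plan is to chain together three standard ingredients: (i) convexity of the loss in the hypothesis, which lets me push the weighted average $h_s = \sum_{i=1}^n p_i h_i$ outside of the target risk; (ii) the single-source domain-adaptation inequality of Ben-David et al.\ applied once per client distribution $\Dcal_{i,k}$ against the target $\Dcal$; and (iii) Hoeffding's inequality together with a union bound to transfer from population risk to empirical risk on the $nK$ observed datasets.

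First I would exploit the fact that $h_s$ is a convex combination of the $h_i$ with $\sum_i p_i = 1$, combined with the convexity of $\ell$ in its first argument (or, equivalently, Jensen's inequality interpreting $h_s$ as a stochastic mixture), to obtain
\[
\Lcal_{\Dcal}(h_s) \;\leq\; \sum_{i=1}^n p_i \, \Lcal_{\Dcal}(h_i).
\]
This is what decouples the global student model into its $n$ teachers and lets the analysis proceed cohort by cohort.

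Next, for each cohort $i$, I would expand over the mixture $\Dcal_i = \frac{1}{K}\sum_{k=1}^K \Dcal_{i,k}$ by applying the classical domain-adaptation bound (Ben-David et al.) separately to each source-target pair $(\Dcal_{i,k},\Dcal)$:
\[
\Lcal_{\Dcal}(h_i) \;\leq\; \Lcal_{\Dcal_{i,k}}(h_i) + \tfrac{1}{2}\, d_{\Hcal\Delta\Hcal}(\Dcal_{i,k},\Dcal) + \lambda_{i,k}.
\]
Averaging these $K$ inequalities with uniform weights $1/K$ and substituting into the previous display yields the double sum $\sum_{i,k} \frac{p_i}{K}\bigl[\Lcal_{\Dcal_{i,k}}(h_i) + \tfrac{1}{2} d_{\Hcal\Delta\Hcal}(\Dcal_{i,k},\Dcal) + \lambda_{i,k}\bigr]$, which is the population-level version of the stated bound. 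Finally, to replace each $\Lcal_{\Dcal_{i,k}}(h_i)$ by its empirical counterpart $\Lcal_{\hat{\Dcal}_{i,k}}(h_i)$ on the $m$-sample observed dataset, I would apply Hoeffding's inequality to the bounded loss, giving deviation $\sqrt{\log(2/\delta')/(2m)}$ with probability at least $1-\delta'$ per source, and then take a union bound over all $nK$ pairs with $\delta' = \delta/(nK)$. Because the coefficients $p_i/K$ sum to one, the weighted sum of per-source concentration terms collapses into the single additive $\sqrt{\log(2nK/\delta)/(2m)}$ complexity term.

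The main obstacle I expect is the careful bookkeeping of the numerical constants in front of the divergence and $\lambda_{i,k}$ terms: depending on which normalization of $d_{\Hcal\Delta\Hcal}$ and which variant of the Ben-David inequality is invoked, one has to track an extra factor of $1/2$ that may require symmetrizing the bound (e.g.\ averaging the inequality applied to $h_i$ with one applied through an intermediate optimal hypothesis $h^*_{i,k}$). Once the appropriate variant is fixed, the remainder is routine manipulation of convex combinations, and the finite hypothesis class assumption is used only through the union bound over $(i,k)$ that produces the logarithmic factor.
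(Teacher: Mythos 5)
Your proposal is correct and reaches the same population-level bound as the paper, but via a noticeably different middle step. The paper applies the Ben-David single-source inequality once per cohort, with the source taken to be the mixture $\Dcal_i = \frac{1}{K}\sum_{k=1}^K \Dcal_{i,k}$, and then needs two extra decompositions: a separate lemma showing $d_{\Hcal\Delta\Hcal}(\Dcal_i,\Dcal) \leq \frac{1}{K}\sum_k d_{\Hcal\Delta\Hcal}(\Dcal_{i,k},\Dcal)$, and a further bound replacing $\lambda_i$ by $\frac{1}{K}\sum_k \lambda_{i,k}$. You instead apply the single-source inequality directly to each pair $(\Dcal_{i,k},\Dcal)$ and average over $k$; this sidesteps the mixture-discrepancy lemma entirely and, more importantly, avoids the delicate $\lambda_i \leq \frac{1}{K}\sum_k \lambda_{i,k}$ step, which is the least innocuous part of the paper's argument (an infimum of an average dominates the average of infima, so that inequality as stated requires care, whereas your per-client route never needs it). The Jensen step on $h_s$ and the Hoeffding-plus-union-bound step with $\delta' = \delta/(nK)$, collapsing because $\sum_{i,k} p_i/K = 1$, coincide with the paper's. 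One caveat you correctly anticipated: both your route and the paper's own appendix derivation yield the coefficient $\frac{p_i}{K}$ in front of $\bigl(\frac{1}{2} d_{\Hcal\Delta\Hcal}(\Dcal_{i,k},\Dcal) + \lambda_{i,k}\bigr)$, whereas the theorem as printed in the main text shows $\frac{p_i}{2K}$; that tighter constant is not delivered by either argument, so your "constant bookkeeping" concern reflects an inconsistency internal to the paper rather than a gap in your proof. A minor additional remark: like the paper, you apply Hoeffding to $h_i$ as if it were fixed independently of $\hat{\Dcal}_{i,k}$, so the finiteness of $\Hcal$ is not genuinely exploited beyond the union bound over the $nK$ sources; this looseness is shared with the paper's proof and is not specific to your approach.
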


We provide a proof in~\Cref{appendix:proof_of_theorem}.
In essence, the theorem establishes an upper bound on the loss of the global model by the weighted sum of: 
i) the risk of each cohort model on its clients' data; 
ii) the discrepancy between the cohort's client distribution and the target distribution, quantified by the second term on the right-hand side; and 
iii) a constant term contingent on the number of data points and the number of sources. 
Additionally, it highlights the benefits of using multiple small cohorts, which generally achieve better performance on their clients’ data compared to larger cohorts. By improving the performance of individual cohort models, we consequently strengthen the performance bound of the global model.
Our theorem extends previous works \cite{taolin,gong2022preserving} to a more general setting where we distill knowledge from teacher models which themselves are trained on a mixture of their respective client distributions.

\section{Evaluation}
\label{sec:eval}

We implement \sys and explore different trade-offs between achieved accuracy, training time, and resource usage for different data distributions and number of cohorts.

\subsection{Experiment setup}
\label{sec:exp_setup}
We have implemented \sys using PyTorch and published all source code online.\footnote{See \url{https://github.com/sacs-epfl/cpfl}.}

\textbf{Dataset.}
We experiment with both the \cifar and \femnist datasets.
\cifar is a common and well-known baseline in machine learning and consists of \num{50000} color images, divided amongst ten classes~\cite{krizhevsky2009learning}.
\femnist contains \num{805263} grayscale images, divided amongst 62 classes~\cite{caldas2018leaf}.
As backbone models, we use a LeNet architecture~\cite{lecun1989handwritten} for \cifar and a CNN~\cite{mcmahan2017communication} for \femnist.
We fix the batch size to 20.
Clients within a cohort train using the standard \fedavg algorithm utilizing the SGD optimizer with a learning rate of $ \eta = 0.002 $ and a momentum of 0.9 for \cifar, and $ \eta = 0.004 $ for \femnist.
These values are taken from existing works~\cite{hsieh2020non,dhasade2023decentralized}.
Clients perform one local epoch in a given round for \cifar and five local steps for \femnist. 

We consider a network with \num{200} nodes for \cifar and take a fixed random subset of \num{1000} nodes (out of \num{3550} nodes) for \femnist.
We vary the number of cohorts $n \in [1,200]$ for \cifar and $n \in [1,64]$ for \femnist where for each setting of $n$, the nodes are split randomly into $n$ cohorts.
For \cifar, we experiment with varying degrees of \ac{non-IID}ness, controlled by $ \alpha $.
Specifically, we construct heterogeneous data splits using the Dirichlet distribution, in line with previous work~\cite{hsu2019measuring,taolin,gong2022preserving}.
We experiment with $ \alpha = 1 $ (low heterogeneity), $\alpha = 0.3$ (moderate heterogeneity) and $\alpha = 0.1 $ (high heterogeneity).
\femnist follows a natural non-IID partitioning based on the data sample creators.

\textbf{Distillation.}
For \cifar, we perform \ac{KD} using the STL-10 dataset~\cite{coates2011analysis}, a dataset inspired by \cifar that has also been utilized in earlier studies~\cite{sattlercfd}.
This dataset contains \num{100000} data samples.
For \femnist, we use the SVHN dataset, containing \num{531131} unlabeled images of house numbers~\cite{sermanet2012convolutional}.
For distillation, we use the Adam optimizer, a learning rate of 0.001, a batch size of 512, and train for 50 epochs.

\textbf{Validation set.}
To progress to the second stage of the algorithm, \sys requires a signal to determine when cohorts have finished training.
To achieve this, we have nodes within the cohort compute and report the loss of the global model on a local validation set.
This validation set is 10\% of the local dataset, and only nodes with at least 10 data samples construct this validation set and report their validation loss.
The cohort server collects all validation losses during each round and averages them.
A cohort stops model training once the minimum validation loss has not decreased for $ r $ rounds.
We have conducted various trials, and we found that a value of $r = 50 $ for \cifar and $ r = 200 $ for \femnist achieves a reasonable trade-off between waiting too long and letting cohort models sufficiently converge.
We also apply a moving average with a window size of 20 to reduce noise.

\textbf{Traces.}
We evaluate \sys under realistic settings to provide a closer approximation to real-world conditions.
To this end, we integrate network and compute capacities traces to simulate the hardware performance of nodes~\cite{lai2022fedscale}.
These traces contain the hardware profile of \num{131000} mobile devices and are sourced initially from the AI and MobiPerf  benchmarks~\cite{ignatov2019ai,huang2011mobiperf}.
They span a broad spectrum ranging from the network speeds of 130 KB/s to 26 MB/s while the compute speeds vary from 0.9 secs to 11.9 secs to train a mini-batch.
We assume that the cohort servers have sufficiently large incoming and outgoing bandwidth capacities that can handle all transfers in a round in parallel.
Finally, all nodes remain online and available during the experiment.

\textbf{Compute infrastructure and implementation.}
We run all experiments on machines in our compute cluster.
Each machine is equipped with dual 24-core AMD EPYC-2 CPU, 128 GB of memory, a NVIDIA RTX A4000 GPU, and is running CentOS 8.
For reproducibility and in line with related work in the domain, we simulate the passing of time in our experiments~\cite{lai2022fedscale,lai2021oort,abdelmoniem2023refl}.
We achieve this by customizing the default event loop provided by the \texttt{asyncio} library and processing events without delay.

We experiment with \fedavg~\cite{mcmahan2017communication} as the default \FL algorithm for training within the cohort.
We report top-1 test accuracies when evaluating the final model with the test sets.
Lastly, we run each experiment with five different seeds, varying data distribution (for \cifar) and node characteristics, and average all results.

\begin{figure*}[t]
	\inputplot{plots/cifar10_no03}{1}
	\caption{The test accuracy, convergence time and resource usage (in CPU hours) of \cifar, for increasing number of cohorts ($ n $) and different heterogeneity levels (controlled by $ \alpha $). 
    Results for $ \alpha = 0.3 $ are included in~\Cref{sec:app_cifar10}.
    }
\label{fig:cifar10_xlog}
\end{figure*}

\begin{figure*}[t]
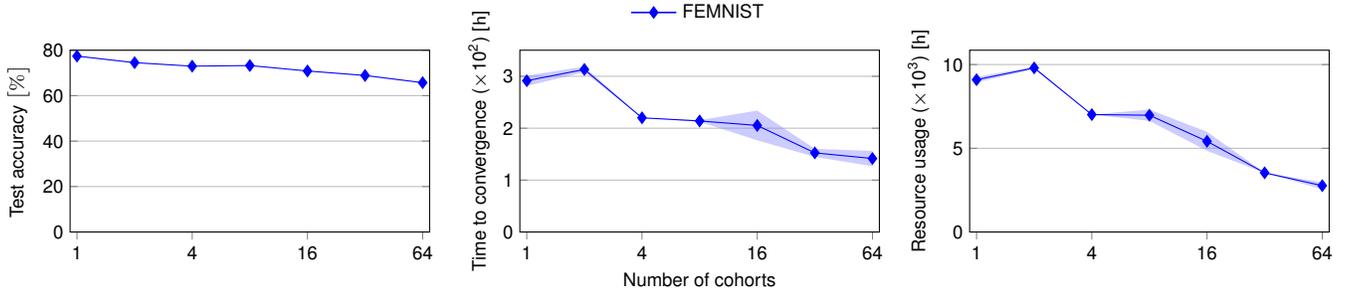

	\inputplot{plots/femnist}{2}
    \caption{The test accuracy, convergence time and resource usage (in CPU hours) of \femnist, for increasing number of cohorts ($ n $).}
	\label{fig:exp_femnist}
\end{figure*}

\subsection{Time and resource savings by \sys}
\label{sec:exp_performance}
We first evaluate the test accuracy, time to convergence, and resource usage by \sys, for \cifar and \femnist under different data distributions ($\alpha$) and number of cohorts ($n$).
For \cifar, we vary $ n $ from 1 to 200; $ n = 1$ corresponds to the traditional FL setting, whereas $ n = 200 $ assigns each node to its own cohort, corresponding to one-shot \fedkd as described in \Cref{sec:rw}.
We remark that one-shot \fedkd has only been evaluated with 20 nodes at most, and its performance in larger networks remains unknown.
For \cifar and \femnist, we set the client participation rate to 100\% and 20\% respectively.

\textbf{Test accuracy.}
\Cref{fig:cifar10_xlog,fig:exp_femnist} (left plot) show the test accuracy (in \%) of \sys for the \cifar and \femnist datasets respectively, across different values of $n$.
Both datasets show a decreasing trend in test accuracy as $ n $ increases.
This is because (i) each cohort learns on fewer data as $ n $ increases, and (ii) \ac{KD} does not perfectly distill all knowledge.
For \femnist, $ n = 1$ shows a test accuracy of 77.4\%, which gradually decreases to 65.7\% for $ n = 64 $.
For \cifar with $\alpha=0.1$, accuracy drops from $70.4\%$ $(n = 1)$ to $47.9\% $ $(n = 200)$. 
Nonetheless, we observe the drops are marginal when considering lower values of $n$. 
For instance, for \cifar with $\alpha=0.1$, the accuracy of $70.4\%$ at $n=1$ reduces just to $69.8\%$ at $n=4$. Thus lower values of $n$ make for an interesting case for obtaining significant time and resource savings as we show next.

\textbf{Time-to-accuracy.} \Cref{fig:cifar10_xlog,fig:exp_femnist} (middle plot) show the time to convergence of \sys for the same evaluated values of $ n $.
This is the time between starting model training and training completion by all cohorts.
For \cifar, we notice a stark decrease in time to convergence as $ n $ increases: for $ \alpha = 0.1 $ and when increasing $ n $ from 1 to 4, the time until convergence decreases from \SI{413}{\hour} to \SI{218}{\hour}, a speedup of $1.9\times$ with a minimal loss in test accuracy.
Similarly, on the \femnist dataset, the time until convergence reduces from \SI{290}{\hour} $(n = 1)$ to \SI{220}{h} $(n = 4)$, with a speedup of $1.3\times$.
This speedup is because cohorts with fewer data samples require less time to converge.
We remark that~\Cref{fig:cifar10_xlog} excludes the time to complete the KD, but we found this to be minimal compared to the overall training time. 
We provide more details on this in~\Cref{sec:exp_kd_time}.
For \cifar with non-IID data ($\alpha=0.1$), \sys can obtain speedups between $1.6-7\times$ as $n$ varies from 2 to 200.
These speedups might be particularly beneficial in practical FL settings where one has to execute many runs, \eg for hyperparameter tuning~\cite{khodak2021federated}.

We further analyze the training speedups by \sys against \ac{FL} by distilling the models produced by each cohort after a fixed duration. 
Specifically, we distill $10$ and $15$ hours into the experiment for $\alpha = 0.1$ and the \cifar dataset.
10 and 15 hours into the experiment, \ac{FL} attained $28.7$\% and $35.4$\% test accuracy, respectively.
For $ n = 2 $, these accuracies are $36.16$\% and $39.56$\%, respectively.
For $ n = 16 $ the increase in accuracy over \ac{FL} is even more pronounced, reaching $54.1$\% and $56.7$\% accuracy $10$ and $15$ hours into the experiment, respectively.
We attribute these speedups primarily to the decrease in round durations.
As $ n $ increases, cohorts become smaller and thus a round requires fewer client updates compared to the setting with $ n = 1 $ where the server needs all $200$ model updates to conclude a round.

\begin{figure}[t]
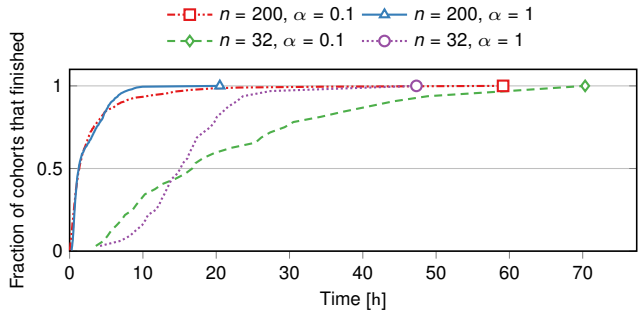

	\inputplot{plots/times}{3}
	\caption{The finish times of individual cohorts, for different numbers of total cohorts and data distributions. We mark the finish time of each group with a symbol.
}
	\label{fig:times}
\end{figure}

\textbf{Resource usage.} \Cref{fig:cifar10_xlog,fig:exp_femnist} (right plot) visualize the resource usage of \sys regarding CPU hours, representing the total time CPUs spent on training. 
We observe a reduction in resource usage thanks to the faster convergence of cohort models. 
For \cifar in the high heterogeneity setting ($\alpha = 0.1$), increasing $n$ from 1 to 4 decreases the required CPU hours from \SI{5577}{\hour} to \SI{4249}{\hour}, or $1.3\times$. 
Similarly for the \femnist dataset, the CPU hours reduce from \SI{9093}{\hour} $(n = 1)$ to \SI{7011}{\hour} $(n = 4)$, or $1.3\times$.
On the other hand, when increasing $n$ up to 200, a substantial reduction of 8.8$\times$ is achieved for $\alpha = 0.1$ for \cifar, albeit with a trade-off of decreased accuracy. 
Consequently, in the non-IID case with $\alpha=0.1$, \sys demonstrates the potential to reduce resource usage by $1.2 - 8.8\times$ as $n$ varies from 2 to 200.
We also observed reductions in communication volume, both for \cifar and \femnist.
They follow a similar trend as the reduction in resource usage, and we further comment on this in \Cref{sec:exp_comm_cost_savings}.

\textbf{Experimental conclusion.} Our results suggest that a reasonable value for the number of cohorts lies between 4 and 16. We conclude that while \ac{FL} ($n = 1$) trains more accurate models compared to \sys, it incurs longer training times and more resource usage. 
On the other hand, one-shot \fedkd $(n = M)$ suffers from accuracy loss but shows great potential to reduce training time and resource usage.
\sys can proficiently navigate these two extremes by controlling the number of cohorts, offering \ac{FL} practitioners a simple way to tailor \ac{FL} training sessions according to their requirements.

The remaining experiments in this section use the \cifar dataset as it allows experimentation with different degrees of non-IIDness.

\subsection{Training time of cohorts}
\label{sec:training_time}
We further analyze the convergence times of cohorts in some of the experiments in~\Cref{sec:exp_performance}.
We plot in~\Cref{fig:times}, for $ n = 32 $ and $ n = 200$, and $ \alpha = 1 $ and $ \alpha = 0.1 $, an ECDF with the fraction of cohorts that completed training as the experiment progressed.
These numbers correspond to the experiment described in~\Cref{sec:exp_performance} on the \cifar dataset.
In the plot, we annotate the completion of the last cohort in each setting with a marker.

Our first observation is that for $ n = 200 $, 75\% of the cohorts converge within less than 5 hours.
We do notice some slower cohorts that prolong overall training.
Compared to when using $ \alpha = 1$, this slowdown is more pronounced for $ \alpha = 0.1 $, where data distribution is much more skewed.
We also notice similar effects for $ n = 32 $, and we observe a higher variation of training times across cohorts for $ n = 32 $ and $ \alpha = 0.1 $, compared to $ n=32 $ and $ \alpha = 1 $.
Our results in ~\Cref{fig:times} suggest that FL practitioners can further gain speedups by proceeding to the \ac{KD} step even when a fraction of cohorts (\eg 75\%) have converged instead of waiting for all cohorts to finish training. 
This is similar in spirit to how the federated server in traditional \ac{FL} tolerates partial updates when not all clients complete the expected number of local steps in specified time~\cite{li2020federated}, albeit at the cost of accuracy.

\begin{figure}
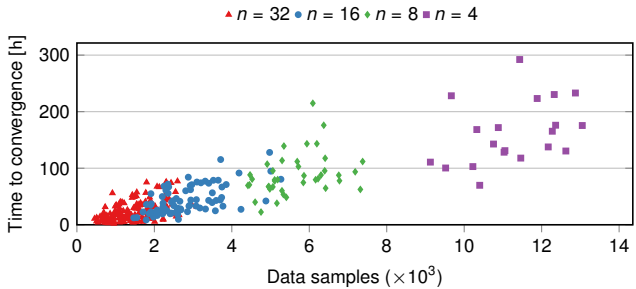

	\inputplot{plots/samples_time}{4}
	\caption{The relation between the number of data samples and the time until convergence within a cohort, for $ n = 4, 8, 16 $ and 32, and under a non-IID data distribution ($ \alpha = 0.1 $). We have $5*n$ measurements for each $n$ due to $5$ seeds.
	}
	\label{fig:data_samples_convergence_time}
\end{figure}

\subsection{Cohort data samples and training time}
\label{sec:data_samples_vs_train_time}

\sys is based on the premise that smaller cohorts \ie cohorts with fewer clients and consequently with fewer total data samples, converge quicker.
To validate this premise, we plot in~\Cref{fig:data_samples_convergence_time} the relation between the number of total data samples and the time until convergence for every cohort under a non-IID data distribution ($ \alpha = 0.1 $).
We extract results for $ n = 4, 8, 16 $ and 32 for the experiment runs described in Section~\ref{sec:exp_performance}.
\Cref{fig:data_samples_convergence_time} hints at a positive relation between the number of data samples in a cohort and the time until convergence.
Therefore, \ac{FL} practitioners can manipulate $n$ to increase or decrease the average number of data samples per cohort, subsequently controlling the time required for the cohort to finish training.

\begin{table}[t]
    \centering
    \begin{tabular}{c|c|c|c|c}
    	\toprule
        \textbf{$\alpha$} & $n$ & \textbf{Teacher acc.} & \textbf{Student acc.} & \textbf{$\Delta$} \\ \midrule
        \multirow{4}{*}{0.1} & 4 & 65.76 $\pm$ 2.63 & 69.79 $\pm$ 0.55 & +4.03 \\
        & 16 & 48.24 $\pm$ 5.86 & 64.59 $\pm$ 0.39 & +16.35 \\ 
        & 64 & 27.64 $\pm$ 7.38 & 60.28 $\pm$ 0.80 & +32.65 \\ 
        & 200 & 17.32 $\pm$ 5.67 & 47.89 $\pm$ 1.75 & +30.57 \\ \midrule
        \multirow{4}{*}{1} & 4 & 74.44 $\pm$ 0.86 & 74.74 $\pm$ 0.30 & +0.30 \\
        & 16 & 63.37 $\pm$ 1.76 & 69.11 $\pm$ 0.34 & +5.74 \\ 
        & 64 & 42.44 $\pm$ 5.17 & 58.05 $\pm$ 0.24 & +15.61 \\ 
        & 200 & 27.00 $\pm$ 5.29 & 48.43 $\pm$ 0.27 & +21.44 \\
        \bottomrule
    \end{tabular}
    \caption{The average accuracy of teacher and student models for varying values of $\alpha$ and $ n $, and for the \cifar dataset. The right-most column shows the improvement in test accuracy by knowledge distillation.}
    \label{tab:kd_performance}
\end{table}

\subsection{Teacher and student accuracies of \ac{KD}}
\label{sec:acc_of_KD}
We now assess the accuracy improvement by \ac{KD}.
We show in~\Cref{tab:kd_performance} the average test accuracy of the teacher and student models and the average improvement $(\Delta)$, for $ \alpha = 0.1 $ and $ \alpha = 1$ across four different values of $ n $.
We also report the corresponding standard deviations.

\Cref{tab:kd_performance} shows that increasing $ n $ improves the gain in test accuracy induced by \ac{KD}.
This is because, with higher values of $ n $, each cohort trains on proportionately less data, affecting the generalization performance of the teacher models.
\ac{KD} in this case draws a larger \textit{relative} improvement.
We also observe that the gains in accuracy are higher for $\alpha = 0.1$ compared to $\alpha = 1.0$.
We suspect this is because under high heterogeneity, \ac{KD} is more capable of combining models that recognize specialized features compared to low heterogeneity settings where mostly similar models are produced.
This observation also aligns with \ac{KD}'s original concept as a method to effectively combine knowledge from \textit{specialized} ensembles, shown in ~\cite{hintondistillation}.

\section{Conclusion}
This paper introduced Cohort-Parallel Federated Learning (\sys), an innovative approach to enhance \ac{FL} by partitioning the network into several smaller cohorts.
The underpinning principle of \sys is that smaller networks lead to quicker convergence and more efficient resource utilization.
By unifying the cohort models with Knowledge Distillation and a cross-domain, unlabeled dataset, we produce a final global model that integrates the contributions from all clients.
Our experimental findings confirm that this strategy yields significant advantages in training time and resource utilization without considerably compromising test accuracy.
This approach offers practitioners a tangible means to control their FL resource consumption and convergence timelines.

\begin{acks}
This work has been funded by the Swiss National Science Foundation, under the project ``FRIDAY: Frugal, Privacy-Aware and Practical Decentralized Learning'', SNSF proposal No. 10.001.796.
\end{acks}

\bibliographystyle{plain}
\bibliography{main.bib}

\clearpage

\appendix

\section{Theoretical Analysis}
\label{appendix:proof_of_theorem}
We begin by recalling the notations and definitions used throughout the theoretical analysis. We define $\mathcal{H} \Delta \mathcal{H}$-divergence between a source $\Dcal'$ and target $\Dcal$ distribution as:
\begin{align*}
    d_{\mathcal{H} \Delta \mathcal{H}} (\Dcal', \Dcal) 
    &= 2 \sup_{h,h' \in \mathcal{H}} 
    \bigg| \mathbb{P}_{x \sim \Dcal'} (h(x) \neq h'(x)) \\
    &\quad - \mathbb{P}_{x \sim \Dcal} (h(x) \neq h'(x)) \bigg|
\end{align*}
Let $h_s = \sum_{i=1}^n p_i h_i$ where $\sum_{i=1}^n p_i = 1$ and $p_i > 0$, denote the hypothesis on the global sever and $h_i$ the one on the cohort $i$. Let $\Dcal$ be any target distribution and $\Dcal_i$ be a mixture of $K$'s local distribution $\{\Dcal_{i,k}\}_{k=1}^K$, that is, $\Dcal_i = \frac{1}{K}\sum_{k=1}^K \Dcal_{i,k}$. Denote the expected risk of hypothesis $h$ on the distribution $\Dcal$ as $\mathcal{L}_{\Dcal}(h)$. We state the following lemmas.

\begin{lemma}[Theorem 3, \cite{domain-adaptation-theory}]
\label{lmm:domain-adapt}
For any target distribution $\Dcal$ and source distribution $\Dcal'$. The expected risk of a hypothesis $h$ on the $\Dcal$ is bounded by
\begin{align}
    \Lcal_{\Dcal} (h) \leq \Lcal_{\Dcal'} (h) + \frac{1}{2} d_{\Hcal \Delta \Hcal} (\Dcal',\Dcal) + \lambda 
\end{align}
where we note $\lambda = \inf_{h \in \Hcal} \curlybracket{\Lcal_{\Dcal'}(h) + \Lcal_{\Dcal}(h)}$
\end{lemma}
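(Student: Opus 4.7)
The plan is to follow the classical domain-adaptation argument of Ben-David et al. (Theorem 3 of~\cite{domain-adaptation-theory}, which the paper cites) by routing the target risk through an ideal joint hypothesis and repeatedly applying the triangle inequality. Working with the $0/1$ loss so that the disagreement $\Lcal_{\Dcal}(h,h') := \proba{h(\vect{x}) \neq h'(\vect{x})}$ with $\vect{x} \sim \Dcal$ behaves as a pseudo-metric, I introduce the ideal joint hypothesis $h^{\star} := \argmin_{g \in \Hcal} \curlybracket{\Lcal_{\Dcal'}(g) + \Lcal_{\Dcal}(g)}$, so that by construction $\Lcal_{\Dcal'}(h^{\star}) + \Lcal_{\Dcal}(h^{\star}) = \lambda$.

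First, letting $f_{\Dcal}$ denote the labelling function on $\Dcal$, I write $\Lcal_{\Dcal}(h) = \Lcal_{\Dcal}(h, f_{\Dcal}) \leq \Lcal_{\Dcal}(h, h^{\star}) + \Lcal_{\Dcal}(h^{\star}, f_{\Dcal}) = \Lcal_{\Dcal}(h, h^{\star}) + \Lcal_{\Dcal}(h^{\star})$. I then swap the disagreement domain from $\Dcal$ to $\Dcal'$ via $\Lcal_{\Dcal}(h, h^{\star}) \leq \Lcal_{\Dcal'}(h, h^{\star}) + \abs{\Lcal_{\Dcal}(h, h^{\star}) - \Lcal_{\Dcal'}(h, h^{\star})}$, and apply the triangle inequality once more on the source side to obtain $\Lcal_{\Dcal'}(h, h^{\star}) \leq \Lcal_{\Dcal'}(h) + \Lcal_{\Dcal'}(h^{\star})$. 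For the remaining absolute difference, the pair $(h, h^{\star})$ is an admissible witness in the supremum that defines $d_{\Hcal \Delta \Hcal}$, and the factor of $2$ built into that definition supplies the $\tfrac{1}{2}$ in the bound, giving $\abs{\Lcal_{\Dcal}(h, h^{\star}) - \Lcal_{\Dcal'}(h, h^{\star})} \leq \tfrac{1}{2} d_{\Hcal \Delta \Hcal}(\Dcal', \Dcal)$.

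Assembling the three inequalities produces $\Lcal_{\Dcal}(h) \leq \Lcal_{\Dcal'}(h) + \tfrac{1}{2} d_{\Hcal \Delta \Hcal}(\Dcal', \Dcal) + \parenthese{\Lcal_{\Dcal'}(h^{\star}) + \Lcal_{\Dcal}(h^{\star})}$, and the parenthesised term equals $\lambda$ by the choice of $h^{\star}$. The only subtle step is justifying the pseudo-metric structure of $\Lcal_{\cdot}(h, h')$: symmetry is immediate, and the triangle inequality is the pointwise fact $\mathbbm{1}\curlybracket{a \neq c} \leq \mathbbm{1}\curlybracket{a \neq b} + \mathbbm{1}\curlybracket{b \neq c}$, which lifts to expectations by linearity. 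No concentration or union-bound machinery is needed here, since the statement concerns only population risks; the whole argument amounts to three triangle inequalities together with the definition of $d_{\Hcal \Delta \Hcal}$.
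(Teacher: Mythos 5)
The paper does not actually prove this lemma: it is imported verbatim as Theorem~3 of the cited domain-adaptation reference (Ben-David et al.), so there is no in-paper argument to compare against. Your reconstruction is the standard proof of that theorem and it is correct: the chain $\Lcal_{\Dcal}(h) \leq \Lcal_{\Dcal}(h,h^{\star}) + \Lcal_{\Dcal}(h^{\star})$, the swap of the disagreement domain at cost $\tfrac{1}{2} d_{\Hcal \Delta \Hcal}(\Dcal',\Dcal)$ (the pair $(h,h^{\star})$ being a witness in the supremum), and the final triangle inequality on the source side assemble exactly into the stated bound, with the pseudo-metric structure of the disagreement correctly justified pointwise for the $0/1$ loss. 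Two small caveats worth noting: since $\lambda$ is defined via an infimum, $h^{\star}$ may not exist, so one should take an $\varepsilon$-approximate minimizer and let $\varepsilon \to 0$; and the argument is specific to the $0/1$/disagreement loss, whereas the surrounding paper states risks for a generic loss $\ell$ --- a mismatch inherited from the original citation rather than a flaw in your proof.
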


\begin{lemma}
\label{lmm:mixture-discrepancy}
Let $\{\Dcal_k\}_{k=1}^K$ be a set of distributions and define $\Dcal' = \frac{1}{K} \sum_{k=1}^K \Dcal_k$. For any distribution $\Dcal$, the $\Hcal \Delta \Hcal$-divergence between $\Dcal'$ and $\Dcal$ is bounded by 
\begin{align}
    d_{\Hcal \Delta \Hcal} (\Dcal', \Dcal) \leq \frac{1}{K} \sum_{k=1}^K d_{\Hcal \Delta \Hcal} (\Dcal_k, \Dcal) 
\end{align}
\end{lemma}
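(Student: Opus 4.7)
The plan is to unfold the definition of $d_{\Hcal\Delta\Hcal}$, push the mixture structure of $\Dcal'$ inside the absolute value via linearity, and then finish with the triangle inequality together with the standard fact that the supremum of a sum is bounded by the sum of suprema.

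Concretely, I would first observe that for any fixed pair $h, h' \in \Hcal$, the event $\{x : h(x) \neq h'(x)\}$ is measurable, so by definition of the mixture $\Dcal' = \frac{1}{K}\sum_{k=1}^K \Dcal_k$ we have
\begin{equation*}
\mathbb{P}_{x \sim \Dcal'}(h(x) \neq h'(x)) \;=\; \frac{1}{K}\sum_{k=1}^K \mathbb{P}_{x \sim \Dcal_k}(h(x) \neq h'(x)).
\end{equation*}
Writing $\mathbb{P}_{x \sim \Dcal}(h(x) \neq h'(x)) = \frac{1}{K}\sum_{k=1}^K \mathbb{P}_{x \sim \Dcal}(h(x) \neq h'(x))$ (a trivial rewrite) then lets me express the quantity inside the absolute value in the definition of $d_{\Hcal\Delta\Hcal}(\Dcal',\Dcal)$ as an average over $k$ of per-component differences.

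Next, I would apply the triangle inequality to pull the absolute value inside the average, yielding, for each pair $h, h'$,
\begin{equation*}
\bigl|\mathbb{P}_{\Dcal'}(h \neq h') - \mathbb{P}_{\Dcal}(h \neq h')\bigr| \;\leq\; \frac{1}{K}\sum_{k=1}^K \bigl|\mathbb{P}_{\Dcal_k}(h \neq h') - \mathbb{P}_{\Dcal}(h \neq h')\bigr|.
\end{equation*}
Taking the supremum over $h, h' \in \Hcal$ on both sides and using the elementary inequality $\sup_{h,h'} \frac{1}{K}\sum_k a_k(h,h') \leq \frac{1}{K}\sum_k \sup_{h,h'} a_k(h,h')$, I recover exactly $\frac{1}{K}\sum_{k=1}^K \tfrac{1}{2} d_{\Hcal\Delta\Hcal}(\Dcal_k,\Dcal)$ on the right-hand side. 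Multiplying through by the factor of $2$ from the definition of $d_{\Hcal\Delta\Hcal}$ closes the bound.

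Honestly there is no serious obstacle in this proof; the only subtle point to be careful about is that the sup-of-sum step is an inequality and not an equality (because different pairs $(h,h')$ may maximize different terms in the sum), so the bound is generally not tight. I would mention this explicitly to reassure the reader that the apparent looseness is intrinsic and not an artifact of the derivation. No auxiliary assumptions on $\Hcal$ (finiteness, VC-dimension, etc.) are needed for the statement, since it is a purely measure-theoretic identity about mixtures combined with the triangle inequality.
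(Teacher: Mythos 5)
Your proof is correct and follows essentially the same route as the paper's: expand the mixture by linearity, rewrite the target probability as the same average, then apply the triangle inequality and the sup-of-sum bound before restoring the factor of $2$. No discrepancies to note.
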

\begin{proof}
    \begin{align}
    &d_{\Hcal \Delta \Hcal} (\Dcal', \Dcal) \\
    =& 2 \sup_{h,h' \in \mathcal{H}} \bigg| 
        \mathbb{P}_{x \sim \Dcal'} (h(x) \neq h'(x)) 
    - \mathbb{P}_{x \sim \Dcal} (h(x) \neq h'(x)) \bigg| \nonumber \\
    =& 2 \sup_{h,h' \in \Hcal} \bigg|
        \mathbb{P}_{x \sim \frac{1}{K} \sum_{k=1}^K\Dcal_{k}} (h(x) \neq h'(x)) 
    - \mathbb{P}_{x \sim \Dcal} (h(x) \neq h'(x)) \bigg| \nonumber \\
    =& 2 \sup_{h,h' \in \Hcal} \bigg|
        \frac{1}{K} \sum_{k=1}^K \mathbb{P}_{x \sim \Dcal_k} (h(x) \neq h'(x)) 
    - \mathbb{P}_{x \sim \Dcal} (h(x) \neq h'(x)) \bigg| \nonumber \\
    \leq& \frac{2}{K} \sum_{k=1}^K \sup_{h,h' \in \Hcal} \bigg|
        \mathbb{P}_{x \sim \Dcal_k} (h(x) \neq h'(x)) 
    - \mathbb{P}_{x \sim \Dcal} (h(x) \neq h'(x)) \bigg| \nonumber \\
    =& \frac{1}{K} \sum_{k=1}^K d_{\Hcal \Delta \Hcal} (\Dcal_k, \Dcal)
\end{align}
\end{proof}
\setcounter{theorem}{0}
\begin{theorem}
Let $\Hcal$ be a finite hypothesis class and $h_s :=\sum_{i=1}^n p_i h_i$, where $p_i > 0$ and $\sum_{i=1}^n p_i = 1$. Suppose that each source dataset has $m$ instances. Then, for any $\delta \in (0,1)$, with probability at least $1-\delta$, the expected risk of $h_s$ on the target distribution $\Dcal$ is bounded by:
\begin{align}
    \Lcal_{\Dcal} (h_s) 
    &\leq \sum_{i=1}^n \sum_{k=1}^K \frac{p_i}{K} \Lcal_{\hat{\Dcal}_{i,k}} (h_{i}) \nonumber \\
    &\quad + \sum_{i=1}^n \sum_{k=1}^K \frac{p_i}{K} \bigg( \frac{1}{2} d_{\Hcal \Delta \Hcal} (\Dcal_{i,k}, \Dcal) + \lambda_{i,k} \bigg) \nonumber \\
    &\quad + \sqrt{\frac{\log (2nK/\delta)}{2m}}
\end{align}
\end{theorem}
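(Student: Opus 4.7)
The plan is to peel the bound off in four layers, each directly mirroring the two cited lemmas: first convert the risk of the ensemble $h_s$ into a weighted sum of per-cohort risks via convexity of the loss; second, port each per-cohort risk from the target distribution $\Dcal$ back to the cohort's source distribution $\Dcal_i$ by invoking \Cref{lmm:domain-adapt}; third, unfold $\Dcal_i$ as the uniform mixture $\frac{1}{K}\sum_k \Dcal_{i,k}$ and distribute the three resulting pieces (risk, divergence, and $\lambda$) across the $K$ client distributions using \Cref{lmm:mixture-discrepancy} together with elementary properties of expectations and infima; and finally, pass from true to empirical risks using Hoeffding plus a union bound over the $nK$ source datasets.

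\textbf{Key steps.} Step 1: Since $\ell$ is convex in its first argument (as for the $\ell_1$ distillation loss of \Cref{eq:dist_loss} or any standard convex surrogate), Jensen's inequality applied pointwise to $h_s(\vect{x}) = \sum_i p_i h_i(\vect{x})$ and then integrating against $\Dcal$ gives $\Lcal_\Dcal(h_s) \leq \sum_{i=1}^n p_i \,\Lcal_\Dcal(h_i)$. Step 2: Apply \Cref{lmm:domain-adapt} to each $h_i$ with source distribution $\Dcal_i$ and target $\Dcal$, obtaining
\[
\Lcal_\Dcal(h_i) \;\leq\; \Lcal_{\Dcal_i}(h_i) + \tfrac{1}{2}\,d_{\Hcal\Delta\Hcal}(\Dcal_i,\Dcal) + \lambda_i,
\]
where $\lambda_i := \inf_{h\in\Hcal}\{\Lcal_{\Dcal_i}(h)+\Lcal_\Dcal(h)\}$. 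Step 3: Replace each of the three pieces with its mixture expansion. Linearity of the expected loss yields $\Lcal_{\Dcal_i}(h_i) = \frac{1}{K}\sum_{k=1}^K \Lcal_{\Dcal_{i,k}}(h_i)$. \Cref{lmm:mixture-discrepancy} gives $d_{\Hcal\Delta\Hcal}(\Dcal_i,\Dcal) \leq \frac{1}{K}\sum_k d_{\Hcal\Delta\Hcal}(\Dcal_{i,k},\Dcal)$. And sub-additivity of the infimum, applied to the identity $\Lcal_{\Dcal_i}(h)+\Lcal_\Dcal(h) = \frac{1}{K}\sum_k [\Lcal_{\Dcal_{i,k}}(h)+\Lcal_\Dcal(h)]$, yields $\lambda_i \leq \frac{1}{K}\sum_k \lambda_{i,k}$. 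Step 4: For each of the $nK$ pairs $(i,k)$, Hoeffding's inequality for losses in $[0,1]$ with $m$ samples gives $\Lcal_{\Dcal_{i,k}}(h_i) \leq \Lcal_{\hat\Dcal_{i,k}}(h_i) + \sqrt{\log(2nK/\delta)/(2m)}$ with probability at least $1-\delta/(nK)$; a union bound over the $nK$ pairs yields the concentration term at overall failure probability $\delta$. Combining Steps 1--4 and using $\sum_i p_i = 1$ and $\sum_k \frac{1}{K}=1$ to fold the constant concentration term, one recovers exactly \Cref{eq:theorem_bound}.

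\textbf{Main obstacle.} Step 1 and Step 4 are conceptually routine; the hardest bookkeeping sits in Step 3, where one must verify each of the three mixture decompositions in the correct direction. In particular, the sub-additivity bound $\lambda_i \leq \frac{1}{K}\sum_k \lambda_{i,k}$ is the step most prone to sign errors: it crucially relies on first rewriting $\Lcal_{\Dcal_i}(h)+\Lcal_\Dcal(h)$ as the affine combination $\frac{1}{K}\sum_k [\Lcal_{\Dcal_{i,k}}(h)+\Lcal_\Dcal(h)]$ and only then taking the infimum over $h$, since the infimum of a sum is at most the sum of the infima but never the reverse. A secondary subtlety in Step 4 is that $h_i$ is not literally independent of the samples of $\hat\Dcal_{i,k}$; the finite-$\Hcal$ hypothesis in the statement is exactly what allows one to absorb this data-dependence via uniform convergence, with the $\log(2nK/\delta)$ factor standing in for the combined union bound over sources and hypotheses.
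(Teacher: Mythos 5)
Your overall route is the same as the paper's: Jensen's inequality to reduce $\Lcal_{\Dcal}(h_s)$ to $\sum_i p_i\,\Lcal_{\Dcal}(h_i)$, the domain-adaptation bound of \Cref{lmm:domain-adapt} per cohort, the mixture expansion via \Cref{lmm:mixture-discrepancy}, and Hoeffding plus a union bound over the $nK$ sources. The one step that does not hold as you justify it is $\lambda_i \le \frac{1}{K}\sum_{k}\lambda_{i,k}$. Your stated reason --- that ``the infimum of a sum is at most the sum of the infima'' --- is backwards: for any functions $f_k$, every $h$ satisfies $\frac{1}{K}\sum_k f_k(h) \ge \frac{1}{K}\sum_k \inf_{h'} f_k(h')$, hence $\inf_h \frac{1}{K}\sum_k f_k(h) \ge \frac{1}{K}\sum_k \inf_h f_k(h)$. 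Applied to $f_k(h)=\Lcal_{\Dcal_{i,k}}(h)+\Lcal_{\Dcal}(h)$ this gives $\lambda_i \ge \frac{1}{K}\sum_k \lambda_{i,k}$, the opposite of what you need; different clients may have different near-minimizers, so the single-hypothesis infimum $\lambda_i$ cannot in general be upper bounded by the average of the per-client infima. (The paper's own write-up asserts the same inequality, attributing it to ``triangle and Jensen's inequality,'' and is equally loose at this point.) The clean fix bypasses $\lambda_i$ entirely: apply \Cref{lmm:domain-adapt} once for each pair, with source $\Dcal_{i,k}$ and target $\Dcal$, to get $\Lcal_{\Dcal}(h_i) \le \Lcal_{\Dcal_{i,k}}(h_i) + \frac{1}{2} d_{\Hcal \Delta \Hcal}(\Dcal_{i,k},\Dcal) + \lambda_{i,k}$, then average over $k$ and use $\Lcal_{\Dcal_i}(h_i)=\frac{1}{K}\sum_k \Lcal_{\Dcal_{i,k}}(h_i)$; this yields exactly the intermediate bound you (and the paper) want, with $\lambda_{i,k}$ appearing directly and no comparison between $\lambda_i$ and $\frac{1}{K}\sum_k\lambda_{i,k}$ ever needed.

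A secondary inaccuracy: your claim that the finiteness of $\Hcal$ lets the $\sqrt{\log(2nK/\delta)/(2m)}$ term absorb the data-dependence of $h_i$ is not right as stated --- a uniform-convergence union bound over a finite class would add a $\log\abs{\Hcal}$ contribution that does not appear in the theorem. The stated constant corresponds to two-sided Hoeffding applied to each of the $nK$ sources at level $\delta/(nK)$ with $h_i$ treated as fixed, which is exactly what the paper does. Everything else --- Step 1, Step 2, the risk-linearity and divergence parts of Step 3, and folding the constant via $\sum_i p_i = 1$ --- matches the paper's proof.
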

\begin{proof}
By the definition of $h_s$ and the Jensen inequality, we have:
    \begin{align}
    \label{eq:avg-hypo}
    \Lcal_{\mathcal{D}} (h_s) 
    := \Lcal_{\mathcal{D}} \parenthese{\sum_{i=1}^n p_i h_i} 
    \leq \sum_{i=1}^n p_i \Lcal_{\mathcal{D}}(h_i)
\end{align}
Using \Cref{lmm:domain-adapt}, the expected risk of $h_i$ on $\Dcal$ is bounded by:
\begin{align}
\label{eq:app_lmm1}
    \Lcal_{\Dcal} (h_i) 
    & \leq \Lcal_{\Dcal_i} (h_i) + \frac{1}{2} d_{\Hcal \Delta \Hcal} (\Dcal_i, \Dcal) + \lambda_i
\end{align}
where $\lambda_i = \inf_{h \in \Hcal} \curlybracket{\Lcal_{\Dcal}(h) + \Lcal_{\Dcal_i} (h)}$. Let $\Dcal_i = \frac{1}{K} \sum_{k=1}^K \Dcal_{i,k}$ be a mixture of distributions in cohort $i$, we have then:
\begin{align}
\label{eq:cohort-hypothesis}
    &\Lcal_{\Dcal} (h_i) 
    \leq \Lcal_{\Dcal_i} (h_i) + \frac{1}{2} d_{\Hcal \Delta \Hcal} (\Dcal_i, \Dcal) + \lambda_i \nonumber \\
    \leq & \Lcal_{\Dcal_i} (h_i) + \frac{1}{2K} \sum_{k=1}^K d_{\Hcal \Delta \Hcal} (\Dcal_{i,k}, \Dcal) + \lambda_i \nonumber \\
    \leq & \frac{1}{K} \sum_{k=1}^{K} \Lcal_{\Dcal_{i,k}} (h_{i}) 
        + \frac{1}{2K} \sum_{k=1}^K d_{\Hcal \Delta \Hcal} (\Dcal_{i,k}, \Dcal) 
        + \lambda_i \nonumber \\
    \leq & \frac{1}{K} \sum_{k=1}^{K} \Lcal_{\Dcal_{i,k}} (h_{i}) 
        + \frac{1}{2K} \sum_{k=1}^K d_{\Hcal \Delta \Hcal} (\Dcal_{i,k}, \Dcal) 
        + \frac{1}{K} \sum_{k=1}^K \lambda_{i,k} 
\end{align}
where the second inequality is application of \Cref{lmm:mixture-discrepancy}. The third one follows the fact that $\Dcal_i = \frac{1}{K} \sum_{k=1}^K \Dcal_{i,k}$. The last inequality follows the same fact and application of triangle and Jensen's inequality on $\lambda_i$. Using Hoeffding's inequality, with probability $1-\frac{\delta}{nK}$, the risk of hypothesis $h_i$ on the source distribution $\Dcal_{i,k}$ is upper bounded by 
\begin{align}
\label{eq:hoeffding}
    \Lcal_{\Dcal_{i,k}} (h_i) \leq \Lcal_{\hat{\Dcal}_{i,k}} (h_i) + \sqrt{\frac{\log (2nK/\delta)}{2m}}
\end{align}
where $\Lcal_{\hat{\Dcal}_{i,k}} (h_i)$ is the risk of $h_i$ on the empirical distribution $\hat{\Dcal}_{i,k}$. Combining \Cref{eq:avg-hypo,eq:cohort-hypothesis,eq:hoeffding} and using the same analysis as in \cite{taolin}. With probability at least $1-\delta$ over $nK$ sources of $m$ samples, we have : 
\begin{align}
    &\Lcal_{\Dcal} (h_s) \nonumber \\
    \leq& \sum_{i=1}^n \sum_{k=1}^K \frac{p_i}{K} \Lcal_{\Dcal_{i,k}} (h_{i}) 
        + \sum_{i=1}^n \sum_{k=1}^K \frac{p_i}{2K}  d_{\Hcal \Delta \Hcal} (\Dcal_{i,k}, \Dcal) \nonumber \\
        &\quad + \sum_{i=1}^n \sum_{k=1}^K \frac{p_i}{K}  \lambda_{i,k} \nonumber \\
    \leq& \sum_{i=1}^n \sum_{k=1}^K \frac{p_i}{K} \Lcal_{\hat{\Dcal}_{i,k}} (h_{i}) 
        + \sum_{i=1}^n \sum_{k=1}^K \frac{p_i}{K} \sqrt{\frac{\log (2nK/\delta)}{2m}} \nonumber \\
    &\quad + \sum_{i=1}^n \sum_{k=1}^K \frac{p_i}{2K}  d_{\Hcal \Delta \Hcal} (\Dcal_{i,k}, \Dcal) 
        + \sum_{i=1}^n \sum_{k=1}^K \frac{p_i}{K}  \lambda_{i,k} \nonumber \\
    \leq&  \sum_{i=1}^n \sum_{k=1}^K \frac{p_i}{K} \Lcal_{\hat{\Dcal}_{i,k}} (h_{i}) 
        + \sum_{i=1}^n \sum_{k=1}^K \frac{p_i}{K} \bigg( 
        \frac{1}{2} d_{\Hcal \Delta \Hcal} (\Dcal_{i,k}, \Dcal) + \lambda_{i,k} 
    \bigg) \nonumber \\
    &\quad + \sqrt{\frac{\log (2nK/\delta)}{2m}}
\end{align}

\end{proof}

\section{Additional Notes on Experimental Evaluation}
\label{sec:app_exp_setup}

\subsection{Motivational Plot (\Cref{fig:motivation})}
\label{sec:setup_motivation_plot}

\Cref{fig:motivation}, showing the evolution of validation loss and motivating the approach behind \sys, has been generated from the \cifar experiments described in~\Cref{sec:exp_performance}.
These plots correspond to the validation loss for each of the four cohorts, in a setting with $ n = 4 $ and when using 90 as seed, for $ \alpha = 1 $ and $ \alpha = 0.3 $.

In \Cref{fig:motivation}, we chose to highlight the results of a single run rather than averaging across multiple runs. The rationale behind this decision stems from the fact that when averaging, the stopping criterion which applies to individual runs does not hold any meaningful interpretation for the averaged result.
This means that an average might misrepresent the number of rounds required for convergence or even the validity of the stopping criterion itself.
By focusing on a single run, we ensure the integrity and applicability of our stopping criterion, providing a clearer and more direct interpretation of our results.
We have manually verified that our conclusion also hold true for different seeds and number of cohorts.

\subsection{Knowledge Distillation}
\label{sec:exp_kd_time}
\Cref{fig:cifar10_xlog} excludes the time it takes for the \ac{KD} process to complete. For \cifar, we noticed that \ac{KD} takes between 50 minutes (for \( n = 2 \)) and 305 minutes (for \( n = 200 \)) to complete. 
This time frame represents a small fraction of the time and resources required for model training by cohorts. 
Regarding \femnist, the \ac{KD} process takes between 59 minutes (for \( n = 2 \)) and 16.8 hours (for \( n = 64 \)). 
In the case of \femnist, the majority of time is spent on generating inferences from the teacher models. 
We propose two methods to expedite this process. First, one can parallelize this process by generating logits from distinct teacher models simultaneously. 
Second, one can increase the inference batch size, although at the cost of additional memory usage.

\subsection{\cifar Results with $ \alpha =0.3 $}
\label{sec:app_cifar10}

We show in~\Cref{fig:cifar10_a0.3} the test accuracy, convergence time and resource usage (CPU hours) of the \cifar dataset.
Compared to~\Cref{fig:cifar10_xlog}, this plot includes the $ \alpha = 0.3 $ setting that we omitted from~\Cref{fig:cifar10_xlog} for presentation clarity.

\begin{figure*}[t]
	\inputplot{plots/cifar10_page_xlog}{5}
	\caption{The test accuracy, convergence time and resource usage (in CPU hours) of \cifar, for different number of cohorts ($n$) and different heterogeneity levels (controlled by $ \alpha $).
    }
	\label{fig:cifar10_a0.3}
\end{figure*}

\subsection{Savings in Communication Volume by \sys}
\label{sec:exp_comm_cost_savings}
Besides savings in training time and CPU resource usage (see~\Cref{fig:cifar10_xlog} and~\ref{fig:exp_femnist}), \sys also provides savings in communication volume.
\Cref{fig:comm_costs} shows the communication volume required by \sys, for the \cifar and \femnist dataset, and for different number of cohorts ($ n $).
The trend in communication volume as $ n $ increases is comparable to the trend in time to convergence and resource usage shown in~\Cref{fig:cifar10_xlog,fig:exp_femnist}.
For \cifar with $ n = 200 $, since each node will perform standalone training of its model, we remark that the only communication volume incurred is when sending the trained cohort model to the global server.

For the same value of $ n $, we notice that \femnist incurs significantly more communication volume compared to \cifar.
This is because the model size of \femnist in serialized form is significantly larger than that of \cifar, namely 6.7 MB for \femnist compared to 346 KB for \cifar.
Additionally, \femnist also requires many more rounds to converge than \cifar due to the large overall network size (1000 nodes) and the challenging nature of the 62-class classification task. 
As a result, this leads to escalated communication costs.

\begin{figure*}[t]
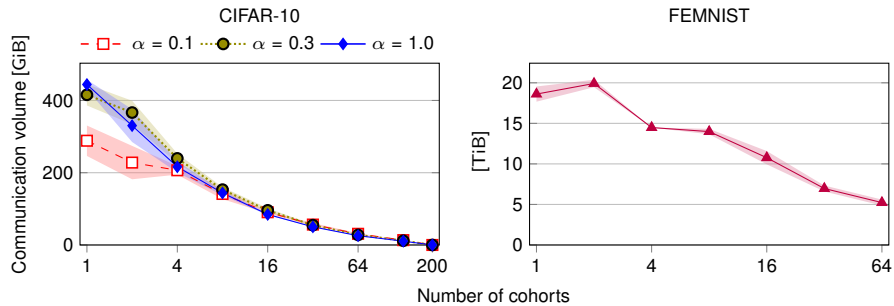

	\centering
	\inputplot{plots/comm_costs}{6}
	\caption{The communication volume required by \sys for convergence, for \cifar and \femnist, and for different number of cohorts ($ n $). For \cifar we also show the communication volume for different values of $\alpha$.}
	 \label{fig:comm_costs}
\end{figure*}

\end{document}